\newcommand{\R}{\mathbb{R}}
\newcommand{\dsp}{\displaystyle}
\newtheorem{prop}{Proposition}
\journal{Computer Speech and Language}
\date{}
\def\ps@pprintTitle{%
  \let\@oddhead\@empty
  \let\@evenhead\@empty
  \let\@oddfoot\@empty
  \let\@evenfoot\@oddfoot
}
\begin{document}

\begin{frontmatter}



\title{Breaking Time Invariance: Assorted-Time Normalization for RNNs}


\author{Cole Pospisil\corref{cor1}}
\ead{cole.pospisil@uky.edu}
\author{Vasily Zadorozhnyy\corref{cor1}}
\ead{vasily.zadorozhnyy@uky.edu}
\author{Qiang Ye\corref{cor2}}
\ead{qye3@uky.edu}
\cortext[cor1]{These authors contributed equally}
\cortext[cor2]{Research supported in part by NSF under grants  DMS-2208314 and DMS-1821144.}
 \address[]{University of Kentucky, 719 Patterson Office Tower, Lexington, 40506, KY, USA}

\begin{abstract}
Methods such as Layer Normalization (LN) and Batch Normalization (BN) have proven to be effective in improving the training of Recurrent Neural Networks (RNNs). However, existing methods normalize using only the instantaneous information at one particular time step, and the result of the normalization is a preactivation state with a time-independent distribution. This implementation fails to account for certain temporal differences inherent in the inputs and the architecture of RNNs. Since these networks share weights across time steps, it may also be desirable to account for the connections between time steps in the normalization scheme. In this paper, we propose a normalization method called Assorted-Time Normalization (ATN), which preserves information from multiple consecutive time steps and normalizes using them. This setup allows us to introduce longer time dependencies into the traditional normalization methods without introducing any new trainable parameters. We present theoretical derivations for the gradient propagation and prove the weight scaling invariance property. Our experiments applying ATN to LN demonstrate consistent improvement on various tasks, such as Adding, Copying, and Denoise Problems and Language Modeling Problems.
\end{abstract}



\begin{keyword}


Assorted-Time Normalization, ATN, Layer Normalization, LN, LSTM
\end{keyword}

\end{frontmatter}


\section{Introduction}
The Recurrent Neural Network (RNN)~\cite{10.5555/104279.104293, doi:10.1073/pnas.79.8.2554}, and variants such as Long Short Term Memory (LSTM)~\cite{Hochreiter:1997:LSM:1246443.1246450} or Gated Recurrent Unit (GRU)~\cite{https://doi.org/10.48550/arxiv.1406.1078, 8668730,https://doi.org/10.48550/arxiv.2208.06496}, are some of the core architectures used for modeling time-series data in Deep Learning today. While LSTMs and GRUs are effective in avoiding problems with vanishing gradients, all of these recurrent models are still subject to issues with exploding gradients, as well as over-fitting. One of the most successful ideas that have been introduced over the years is the normalization of RNNs using methods such as Layer Normalization (LN)~\cite{ba2016layer} and Batch Normalization (BN)~\cite{pmlr-v37-ioffe15,CooijmansBLGC17}. These methods recenter and rescale the preactivation information using the statistics of that time step. This allows for the norm of the model's states and gradients to be controlled, which speeds up training and prevents exploding gradients. 

While these normalization methods have been successful, their applications to RNNs do not involve adaptations to some of the primary characteristics of this class of models, namely that the variation across time imparts usable information. For example, the LN or BN models are invariant to the scaling in the input at any time step and are therefore independent of the norm of the input vector at each time step. Depending on applications, this may have devastating consequences. Additionally, LN and BN produce a preactivation state with a distribution that is invariant across time. Such time invariance properties may impede the architectural structure of RNNs ability to fully exploit the temporal dependencies. Since RNNs share weights across time steps, it would be quite natural to introduce this dependency into the normalization method as well. An attempted version of this involving averaging statistics across time was mentioned in ~\cite{CooijmansBLGC17} but was unsuccessful and was presented without much detail. It appears that simply averaging over every time step is an overcorrection that makes the statistics susceptible to diluted averages and loses effectiveness further into the sequence. Instead, we argue that, by collecting the mean and variance across a smaller subsequence, one is able to gain the benefits of these time dependencies without overly weakening the impact of a single time step.

In this paper, we propose a normalization method called  Assorted-Time Normalization (ATN), which preserves information from multiple consecutive time steps and normalizes using them. Our ATN method can be combined with other normalization methods such as LN and BN that normalize input information along some dimensions but not time. It maintains a short-term memory of the previous $k$ time steps, which allows it to account for the temporal dependencies in a way in which previous methods were incapable. We use that memory to calculate the statistics with respect to which we normalize, giving us an output that has a controlled mean and variance while still being capable of changing between time steps. By using just a limited subsequence at each point in time, we are able to avoid the problems that come from using all or none of the sequences and find the length best suited to the dataset. Since this process just adds a time component to the normalization method, it is adapting without the introduction of any new learnable parameters.

We present theoretical derivations for the gradient propagation and prove the weight scaling invariance property. Our experiments demonstrate consistent improvement using our method on a variety of tasks, such as Adding, Copying, and Denoise  Problems as well as Language Modeling Problems. Our code is available at \href{https://github.com/vasily789/atn}{https://github.com/vasily789/atn}.

\section{Related Work}\label{section:related_work}

One of the earliest attempts to use some sort of normalization technique throughout model layers was Batch Normalization (BN)~\cite{pmlr-v37-ioffe15}. It was proposed for Fully Connected (FC) and Convolutional (CNN) Neural Networks for normalization of network activations across the batch dimension. BN is known often to provide a more stable and accelerated training regimen while improving generalizations. The Instance Normalization (IN)~\cite{ulyanov2017instance} method, contrary to BN, acts like contrast normalization and has primarily been used for image-containing datasets. The paper points out that the output stylized images should not rely on the contrast of the input image content, and hence normalizing the instances helps. The Group Normalization (GN)~\cite{Wu_2018_ECCV} method, which is primarily used for CNNs, normalizes a 3D feature in a convolutional layer by dividing its channels into groups and then normalizing the features in the group in all three dimensions.

Consider the typical structure of an RNN, also known as an RNN cell:

\begin{align}
    h^{(t)}&=f\left(W_{h}h^{(t-1)}+W_{x}x^{(t)}+\beta_h\right)\label{rnn_equation:1}\\
    y^{(t)}&=W_{y}h^{(t)}+\beta_y\label{rnn_equation:2}
\end{align}
where $f$ is a nonlinear activation function. 

The Recurrent Batch Normalization~\cite{CooijmansBLGC17} method applies BN to the hidden-to-hidden and memory cell parts of the LSTM model, which aims to reduce the internal covariate shift between consecutive time steps. ~\cite{NIPS2016_ed265bc9} proposed a Weight Normalization (WN) method. Their idea lies in decoupling the magnitude from the direction of the weight vector to change the parameters of the network, which helps with speeding up learning. Unfortunately, WN appears not widely used in practice due to its limited stability compared to BN~\cite{gitman2017comparison}. 

Layer Normalization (LN) was proposed in~\cite{ba2016layer} to normalize activations along the hidden dimension for both FC networks and RNNs and has since become very popular in RNNs. LN normalizes the preactivation state as follows:

\begin{align}
    h^{(t)}&=f\left(LN\left(W_{h}h^{(t-1)}\right)+LN\left(W_{x}x^{(t)}\right)+\beta_h\right)\label{lnrnn_equation:1}\\
    y^{(t)}&=LN\left(W_{y}h^{(t)}\right) + \beta_y\label{lnrnn_equation:2}
\end{align}
where the LN operator is defined by

\begin{align}
    \mu_{t} = \dfrac{1}{n}\sum_{i=1}^n a_i^{(t)}\quad \sigma_{t}^2 = \dfrac{1}{n}\sum_{i=1}^n\left(a_i^{(t)}-\mu_{t}\right)^2 \label{ln_equations:1}\\
    y^{(t)} = LN(a^{(t)};\gamma,\beta)=\gamma\odot \dfrac{a^{(t)}-\mu_{t}}{\sqrt{\sigma_t^2 + \varepsilon}} + \beta \label{ln_equations:2}
\end{align}

Such a setup helps to get rid of the BN batch dependency and simplifies the application to RNNs.

More recently, Adaptive Normalization (AdaNorm)~\cite{NEURIPS2019_2f4fe03d} made a thorough analysis of LN, and concluded that the rescaling and recentering factors, $\gamma$ and $\beta$ in (\ref{ln_equations:2}), are not as essential as the backward gradients of the mean and variance inside of the LN method. In addition, they proposed a new method, AdaNorm, which replaces weight and bias with some new transformation function.

\section{Assorted Time Normalization}\label{section:theory}

One undesirable property of the adaptation of LN to RNNs is that the statistics for the normalization are calculated at each time step, resulting in a post-normalization state which has mean and variance that are invariant across time. This prevents the model from effectively representing the shifting distributions across time that might be critical in modeling sequential data. For example, the normalization $LN\left(W_{x}x^{(t)}\right)$ in (\ref{lnrnn_equation:1}) is invariant to scaling in $x^{(t)}$, which restricts the model from learning the changing norm of $x^{(t)}$. This may be mitigated by including a bias in the linear term, which is often used in implementations; see section \ref{abl_mnist} for more discussion. Most of the above discussions also apply to BN. 

We propose a new normalization method to break this time invariance. Consider a sequence $\mathbf{a} = \left\{a^{(t)}\right\}\subset {\mathbb R}^n$ produced in an RNN, such as the preactivation state that we wish to normalize. At time step $t$ of the RNN, we maintain a memory of the previous $k$ entries, $\mathbf{a}_k^{(t)} = \left\{a^{(t-k+1)}, \ldots, a^{(t-1)}, a^{(t)}\right\} \subset \mathbf{a}$, in the normalization layer, using this extended set to compute the mean and variance to be used for normalization. This can be combined with other normalization methods. Combining with Layer Normalization, for example, these statistics are calculated at time-step $t$ as follows: 

\begin{align}
    \mu_{t,k}&=\dfrac{1}{nk}\sum_{j=0}^{k-1}\sum_{s=1}^{n}a_s^{(t-j)}\label{atn_stats:1}\\ 
    \sigma_{t,k}^2&=\dfrac{1}{nk}\sum_{j=0}^{k-1}\sum_{s=1}^{n}\left(a_s^{(t-j)}-\mu_{t,k}\right)^2\label{atn_stats:2} 
\end{align}

\begin{figure*}[t]
    \centering
    \includegraphics[width=\textwidth]{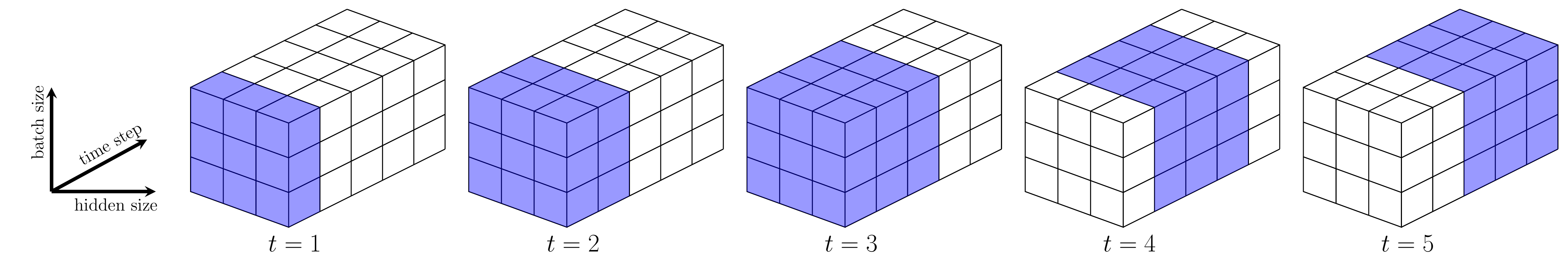}
    \caption{Illustration of the ATN method combined with LN using $k=3$ time steps: Consider the preactivation state tensor in $\R^{5\times 3 \times 3}$. At $t=1$, we use a standard LN; at $t=2$, we normalize using information from time steps $1,2$; at $t=3$ ATN method uses information from time step $t=1,2,3$; at $t=4$, we normalize with respect to time steps $t=2,3,4$; and so on after that.} 
    \label{fig:kLN_representation}
\end{figure*}

See Figure \ref{fig:kLN_representation} for a visual depiction of our method.

Once these statistics are calculated, we then normalize only the current term $a^{(t)}$ and optionally recenter and rescale using $\gamma$ and $\beta$, two trainable parameters shared across time while adding a small epsilon to the variance to prevent division by zero, similar to the LN method in (\ref{ln_equations:2}). 

\begin{align}
    y^{(t)}=ATN(\mathbf{a}_k^{(t)};\gamma,\beta) &:= \gamma\odot\dfrac{a^{(t)}-\mu_{t,k}}{\sqrt{\sigma_{t,k}^2 + \varepsilon}} + \beta
\end{align}

This differs from the process in (\ref{ln_equations:1}) and (\ref{ln_equations:2}) in that we include multiple time steps in our statistic calculations, giving us a double sum instead of the single one for LN. This definition of the statistics is more stable in time, at least for large $k$, changing modestly at each time step with only one term in the set being replaced. This results in a normalized output that is not expected to have a uniform mean and variance across time steps. We argue that this is desirable for sequential problems. Having this potential for variation allows for the model to account for changing norms of the inputs across the sequence, providing additional information about the distribution that is lost with previous methods. 

We may consider using all previous terms in the sequence to compute the statistics, but they will have more variation in early time steps than in later ones. By keeping only $k$ time steps and not the entire sequence, the statistics will vary gradually across time, and we are able to fix the memory and computational costs, which could be significant for long sequences. 
 
Using the information from multiple time steps also effectively provides a larger set on which to calculate statistics. This allows a more accurate approximation to gain a clearer glimpse at the underlying distribution of the dataset. In other words, ATN uses statistics over a larger set that is more stable across time so that the normalized state can retain more variations in time. In contrast, the traditional normalization methods use high-frequency statistics at each time step to produce a normalized state that becomes time-invariant. In particular, the ATN network depends on the scaling of the input vector at a time step while LN and BN do not. However, ATN preserves the desirable weight scaling invariant property, which we show as follows:

Let $H$ and $\Tilde{H}$ be weight matrices for two sets of model parameters, $\theta$ and $\Tilde{\theta}$ respectively, which differ by a scaling factor of $\delta$, i.e. $\Tilde{H}=\delta H$. Then the outputs of ATN are the same:

\begin{align}\label{der:rescaling_atn}
    \Tilde{y}^{(t)}&=\dfrac{\gamma}{\Tilde{\sigma}_{t,k}}\odot\left(\Tilde{H}a^{(t)}-\Tilde{\mu}_{t,k}\right)+\beta\ =\dfrac{\gamma}{\sigma_{t,k}}\odot\left(Ha^{(t)}-\mu_{t,k}\right)+\beta=y^{(t)}
\end{align}
where $\Tilde{\sigma}_{t,k}=\delta {\sigma}_{t,k}$ and $\Tilde{\mu}_{t,k}=\delta {\mu}_{t,k}$. This invariance property makes the ATN network independent of the norm of $H$, mitigating the exploding/vanishing gradient problems.

It is also easy to see that ATN is also invariant to the rescaling of the whole input sequence but not invariant under the rescaling of an individual element in the sequence. 

During training, we backpropagate the gradients with respect to the model parameters. With ATN, a key step is to propagate the gradient through the normalization layer, i.e. $\dfrac{\partial y_i^{(t)}}{\partial a_i^{(t-m)}}$. The following proposition gives the formulas for computing these derivatives.  The proof is provided in \ref{Supp_Mater:A}.

\begin{prop}\label{prop:gradients}
Consider ATN for a sequence $\mathbf{a} = \{a^{(t)}\}\subset {\mathbb R}^n$ produced in a RNN and let $y^{(t)}=ATN(\mathbf{a}_k^{(t)};\gamma,\beta)$. Then, for $0 \leq m \leq k-1$, we have:

\begin{equation}
    \resizebox{0.9\hsize}{!}{$
    \dfrac{\partial y_i^{(t)}}{\partial a_i^{(t-m)}} = \gamma\odot\dfrac{\displaystyle\dfrac{\partial a_i^{(t)}}{\partial y_i^{(t-m)}}\dfrac{\partial y_i^{(t-m)}}{\partial a_i^{(t-m)}} - \dfrac{\partial \mu_{t,k}}{\partial a_i^{(t-m)}}}{\sqrt{\sigma_{t,k}^2 + \varepsilon}} - \gamma\odot\dfrac{a_i^{(t)}-\mu_{t,k}}{2\left(\sigma_{t,k}^2+\varepsilon\right)^{3/2}}\dfrac{\partial \sigma_{t,k}^2}{\partial a_i^{(t-m)}}
    $}
\end{equation}
where

\begin{equation}
    \dfrac{\partial \mu_{t,k}}{\partial a_i^{(t-m)}} = \dfrac{1}{nk}\sum_{j=0}^{m}\dfrac{\partial a_i^{(t-j)}}{\partial a_i^{(t-m)}}
\end{equation}
\begin{equation}
    \resizebox{0.9\hsize}{!}{$
    \dfrac{\partial \sigma_{t,k}^2}{\partial a_i^{(t-m)}} = \dfrac{2}{nk}\dsp\sum_{j=0}^m\left(a_i^{(t-j)}-\mu_{t,k} \right) \dfrac{\partial a_i^{(t-j)}}{\partial a_i^{(t-m)}} - \sum_{j=0}^{k-1}\sum_{s=1}^n\left(a_s^{(t-j)}-\mu_{t,k}\right) \dfrac{\partial \mu_{t,k}}{\partial a_i^{(t-m)}}
    $}.
\end{equation}
\end{prop}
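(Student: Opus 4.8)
The plan is to derive the three identities by straightforward differentiation of the defining relation, working one coordinate at a time. Writing $\partial_a := \partial/\partial a_i^{(t-m)}$ and recalling $y_i^{(t)} = \gamma_i\,(a_i^{(t)}-\mu_{t,k})\,(\sigma_{t,k}^2+\varepsilon)^{-1/2}+\beta_i$, all of the dependence of $y_i^{(t)}$ on $a_i^{(t-m)}$ passes through the three quantities $a_i^{(t)}$, $\mu_{t,k}$, and $\sigma_{t,k}^2$. So I would split the computation into (i) applying the product and quotient rules to this outer map, and (ii) substituting the derivatives of $\mu_{t,k}$ and $\sigma_{t,k}^2$ obtained directly from (\ref{atn_stats:1})--(\ref{atn_stats:2}).

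First I would handle the outer map and the mean. Applying the product rule to $\gamma_i f g$ with $f=a_i^{(t)}-\mu_{t,k}$ and $g=(\sigma_{t,k}^2+\varepsilon)^{-1/2}$, using $\partial_a g = -\tfrac12(\sigma_{t,k}^2+\varepsilon)^{-3/2}\,\partial_a\sigma_{t,k}^2$ and $\partial_a f = \partial_a a_i^{(t)} - \partial_a\mu_{t,k}$, yields exactly the two-term expression in the statement, once $\partial_a a_i^{(t)}$ is rewritten by the chain rule as $\big(\partial a_i^{(t)}/\partial y_i^{(t-m)}\big)\big(\partial y_i^{(t-m)}/\partial a_i^{(t-m)}\big)$; this factorization is valid for $m\ge 1$ because the only way $a^{(t-m)}$ propagates forward to step $t$ is through the normalized state $y^{(t-m)}$, and for $m=0$ the product is to be read as the identity. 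For the mean, I would differentiate (\ref{atn_stats:1}) term by term: the double sum runs over time steps $t-j$, $0\le j\le k-1$, and $a^{(t-j)}$ can depend on $a^{(t-m)}$ only when $t-j\ge t-m$, i.e.\ $j\le m$, since preactivations strictly earlier than step $t-m$ are computed before it. Retaining only the surviving terms (and, under the coordinatewise bookkeeping convention, the $i$-th coordinate) gives $\partial_a\mu_{t,k} = \tfrac{1}{nk}\sum_{j=0}^m \partial a_i^{(t-j)}/\partial a_i^{(t-m)}$.

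Next I would treat the variance. Differentiating (\ref{atn_stats:2}), each summand $(a_s^{(t-j)}-\mu_{t,k})^2$ contributes through both its explicit $a$-dependence and through $\mu_{t,k}$, so
\[
\partial_a\sigma_{t,k}^2 \;=\; \frac{2}{nk}\sum_{j=0}^{k-1}\sum_{s=1}^n\big(a_s^{(t-j)}-\mu_{t,k}\big)\Big(\partial_a a_s^{(t-j)} - \partial_a\mu_{t,k}\Big).
\]
Splitting into the explicit part and the $\mu$-part: the explicit part collapses, just as for the mean, to a sum over $0\le j\le m$ and the $i$-th coordinate, giving the first term of the claimed formula, while the $\mu$-part factors $\partial_a\mu_{t,k}$ out of $\sum_{j,s}(a_s^{(t-j)}-\mu_{t,k})$, which is the remaining term. (It is worth noting $\sum_{j,s}(a_s^{(t-j)}-\mu_{t,k})=0$ by definition of $\mu_{t,k}$, so this last contribution vanishes in exact arithmetic and is kept only because the implementation works in finite precision.) Substituting $\partial_a\mu_{t,k}$ and $\partial_a\sigma_{t,k}^2$ back into the outer-map identity completes the argument.

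The only non-routine points are the two truncations of the $j$-range to $[0,m]$, which rely on the causal, feed-forward-in-time structure of the RNN, and the coordinatewise bookkeeping convention under which the cross-coordinate derivatives $\partial a_s^{(t-j)}/\partial a_i^{(t-m)}$ with $s\ne i$ are absorbed into the per-coordinate recursive factors $\partial a_i^{(t-j)}/\partial a_i^{(t-m)}$; with those conventions fixed, the rest is just the product, quotient, and chain rules. I expect the bookkeeping inside the variance term --- simultaneously tracking the explicit-versus-$\mu$ split and the range of $j$ --- to be the fiddliest part of the write-up.
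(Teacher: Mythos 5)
Your proposal is correct and follows essentially the same route as the paper's proof: term-by-term differentiation of $\mu_{t,k}$ and $\sigma_{t,k}^2$ with the causal truncation of the time index to $j\le m$, followed by the product/quotient rule on the outer normalization map and the chain-rule factorization of $\partial a_i^{(t)}/\partial a_i^{(t-m)}$ through $y_i^{(t-m)}$. Your two side remarks --- that the centered double sum makes the second variance term vanish in exact arithmetic, and that the cross-coordinate derivatives are being absorbed by a coordinatewise bookkeeping convention --- are accurate observations that the paper's proof leaves implicit, but they do not change the argument.
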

Note that the computations of $ \dfrac{\partial y_i^{(t)}}{\partial \beta}$ and $\dfrac{\partial y_i^{(t)}}{\partial \gamma}$ are straightforward and are omitted.

In our experiments, we will use ATN on LSTM networks. Following~\cite{ba2016layer} and~\cite{CooijmansBLGC17}, our ATN method for LSTM is as follows :

\begin{align}
    \begin{pmatrix}
    f^{(t)} \\i^{(t)} \\ o^{(t)} \\ g^{(t)}
    \end{pmatrix} &= ATN(W_h h^{(t-1)}) + ATN(W_x x^{(t)}) + b \\
    c^{(t)} &= \sigma(f^{(t)})\odot c^{(t-1)} + \sigma(i^{(t)}) \odot \tanh(g^{(t)}) \\
    h^{(t)} &= \sigma(o^{(t)})\odot \tanh(ATN(c^{(t)}))
\end{align}
where $\odot$ is the Hadamard product and $\sigma(\cdot)$ is the sigmoid function. 

\section{Experiments}\label{section:experiments}

We have performed a series of experiments which include the Copying~\cite{hochreiter1997long}, Adding~\cite{hochreiter1997long}, and Denoise problems~\cite{8668730, https://doi.org/10.48550/arxiv.1611.09434} as well as Language Modeling on character level Penn Treebank dataset~\cite{10.5555/972470.972475} and word level WikiText-2 dataset~\cite{merity2016pointer}.

All experiments were run using Python 3.7.0, PyTorch 1.1.0, and CUDA 9.0 on a single NVIDIA Tesla V100 GPU.

\subsection{Synthetic Tasks}
\subsubsection{Copying} \label{exp:copying_section}

The copying problem is a common synthetic task that is used to test RNNs, which was originally proposed in~\cite{hochreiter1997long}. For this problem, a string of 10 digits is fed into the RNN sampled uniformly from the integers between 1 and 8. A sequence of $T$ zeros follows this, and a 9, marking the start of a string of 9 zeros, for a total length of $T + 20$. The objective of the task is to output the initial string of 10 digits beginning at the marker's location, copying the initial string from the front to the back. Cross-entropy loss is used to evaluate this model, with a baseline expected cross-entropy of $\frac{10\log{(8)}}{T+20}$ which represents selecting digits 1-8 at random after the 9. 

\textit{Implementation Details:} The models were trained with a batch size of 128, a single LSTM layer with a hidden size of 68, an RMSProp~\cite{tieleman2012lecture} optimizer with a learning rate of $10^{-4}$, and $T$ values of 100 and 200. The ATN model is implemented with $k=45$ for both $T$ values.

\begin{figure}[!tbp]
    \centering
    \begin{subfigure}{0.75\textwidth}
        \centering
        \includegraphics[width=.99\columnwidth]{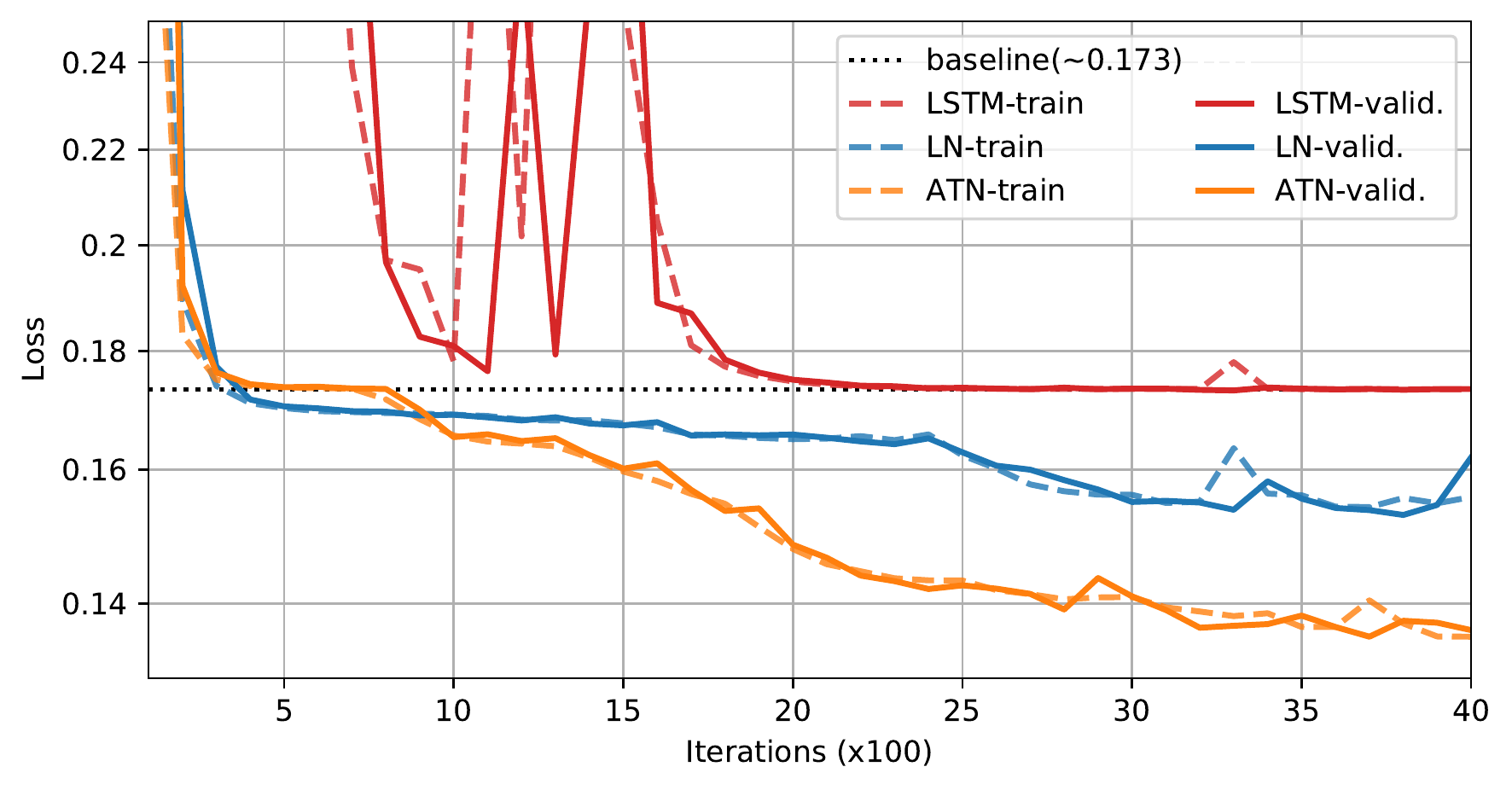}
        \caption{Copying problem with $T=100$}
        \label{exp:copy100}
    \end{subfigure}
    \hfill
    \begin{subfigure}{0.75\textwidth}
        \centering
        \includegraphics[width=.99\columnwidth]{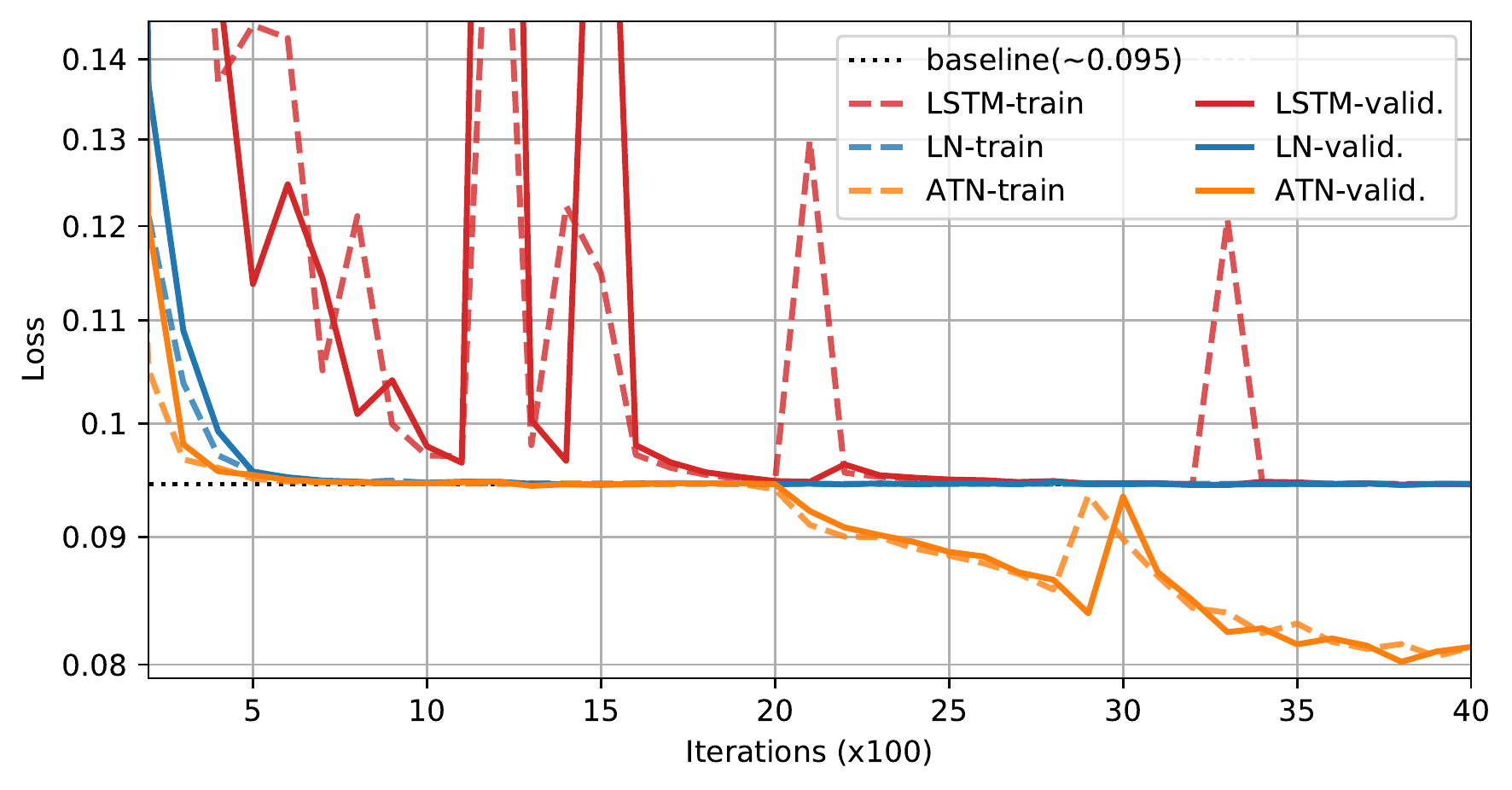}
        \caption{Copying problem with $T=200$}
        \label{exp:copy200}
    \end{subfigure}
    \caption{Results on the Copying problem for $T=100$ and $T=200$. The LSTM and LN-LSTM (LN) models are provided here for comparison purposes with our ATN-LSTM (ATN) method.}
\end{figure}

\begin{table}[!tbp]
    \centering
    \begin{tabular}{c|c|c||c|c}
        & \multicolumn{2}{c||}{Loss $\times$ $10^{-1}$ $\downarrow$} & \multicolumn{2}{c}{Loss $\times$ $10^{-2}$ $\downarrow$} \\
        \midrule
        Sequence Length & \multicolumn{2}{c||}{$T=100$} & \multicolumn{2}{c}{$T=200$} \\
        \midrule
        & train & validation & train & validation \\
        \toprule
        LSTM & 1.739 & 1.731 & 9.445 & 9.445\\
        \midrule
        LN & 1.542 & 1.529 & 9.445 & 9.445\\
        \midrule
        ATN & 1.354 & 1.354 & 8.020 & 8.020\\
        \bottomrule
    \end{tabular}
    \caption{Copying Results: Attained minimum values. $\downarrow$ - denotes the smaller, the better result.}
    \label{tab:copying}
\end{table}

\textit{Results:} For each of the sequence lengths tested, the plain LSTM is incapable of achieving losses below the baseline. While the LN-LSTM is able to do so to some extent on the $T = 100$ version, see Figure \ref{exp:copy100}, it also gets stuck at the baseline loss on the $T = 200$ task, Figure \ref{exp:copy200}. For both of these tasks, our ATN-LSTM model demonstrates eventual losses below those reached by the LN-LSTM model, Figures \ref{exp:copy100} and \ref{exp:copy200}. We also note that the initial rate of convergence is at least as steep if not steeper than that of the LN-LSTM model, demonstrating that the ATN-LSTM has a positive contribution to training in both the short and long term.

\subsubsection{Adding}\label{exp:adding_section}
The adding problem is another synthetic task for RNNs proposed in~\cite{hochreiter1997long}. Our implementation of this problem is a variation of the original problem. The RNN takes a 2-dimensional input of length T. The first dimension consists of a sequence of zeros except for two ones placed randomly in the first and second half of the sequence. The second dimension is a sequence of numbers selected uniformly from $[0, 1)$. The goal of the task is to take the numbers from the second dimension in positions corresponding to the ones and to output their sum.

\textit{Implementation Details:}  The models were trained with a batch size of 50, a single LSTM layer with a hidden size of 60, and an RMSprop~\cite{tieleman2012lecture} optimizer with a learning rate of $10^{-3}$. We use $T$ values of 100 and 200. This task is evaluated with a mean-squared error. The ATN model is implemented with $k$ values of 25 for $T=100$ and 5 for $T=200$.

\textit{Results:} Our model shows consistent improvement over the LSTM and LN-LSTM models. For each example, the ATN shows a rapid initial convergence before settling into a slower rate which is roughly parallel to that of the LN-LSTM. In Figure \ref{exp:add100}, this initial conversion almost manages to take the model to the same loss as is achieved by the LN-LSTM after the entirety of the training. In Figure \ref{exp:add200}, the LN-LSTM is able to separate itself further from the LSTM than in Figure \ref{exp:add100} but is still at a higher loss than the ATN for all but the very beginning of training.

\begin{figure}[!tbp]
    \centering
    \begin{subfigure}{0.75\textwidth}
        \centering
        \includegraphics[width=.99\columnwidth]{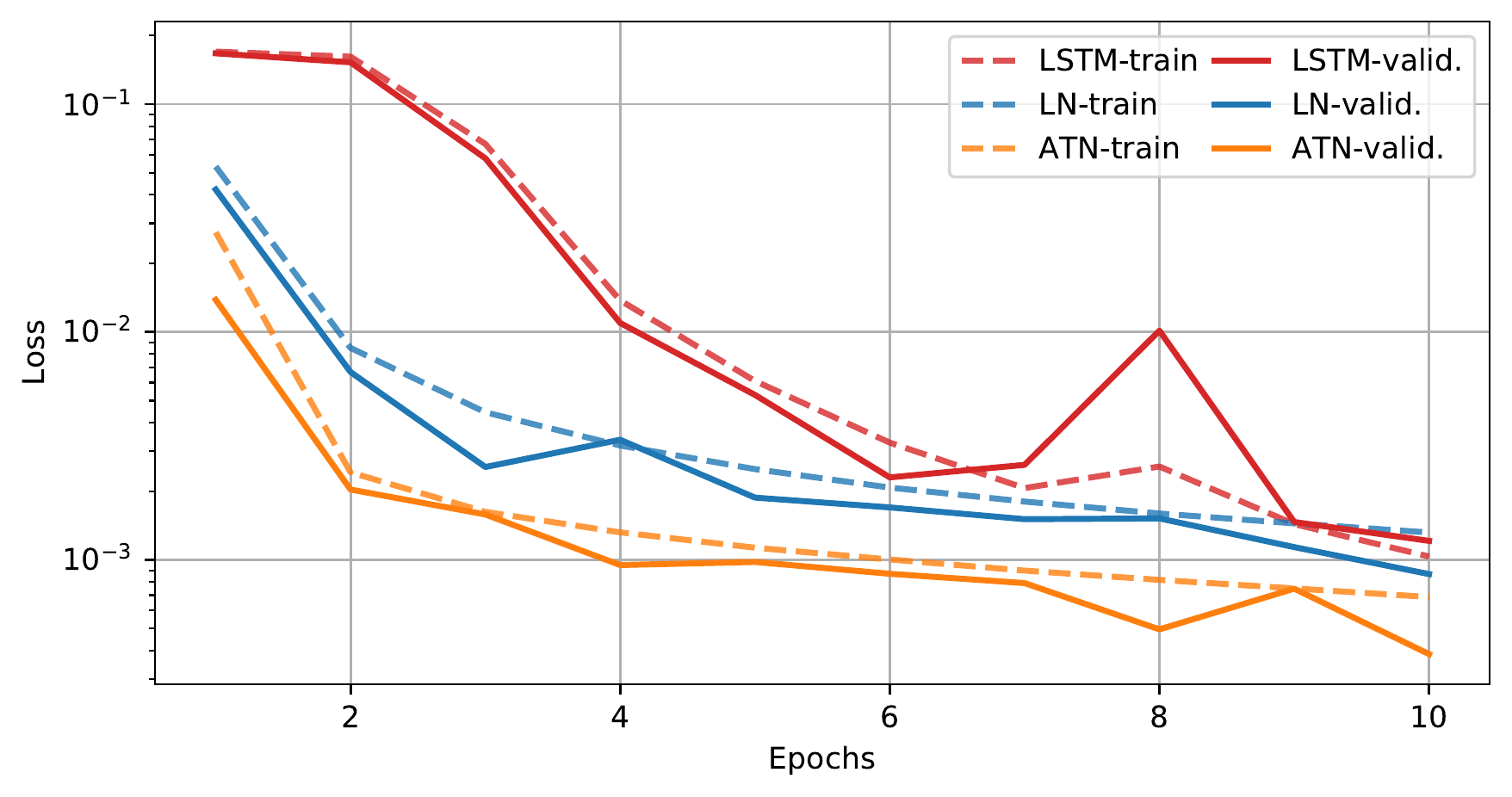}
        \caption{Adding problem with $T=100$}
        \label{exp:add100}
    \end{subfigure}
    \hfill
    \begin{subfigure}{0.75\textwidth}
        \centering
        \includegraphics[width=.99\columnwidth]{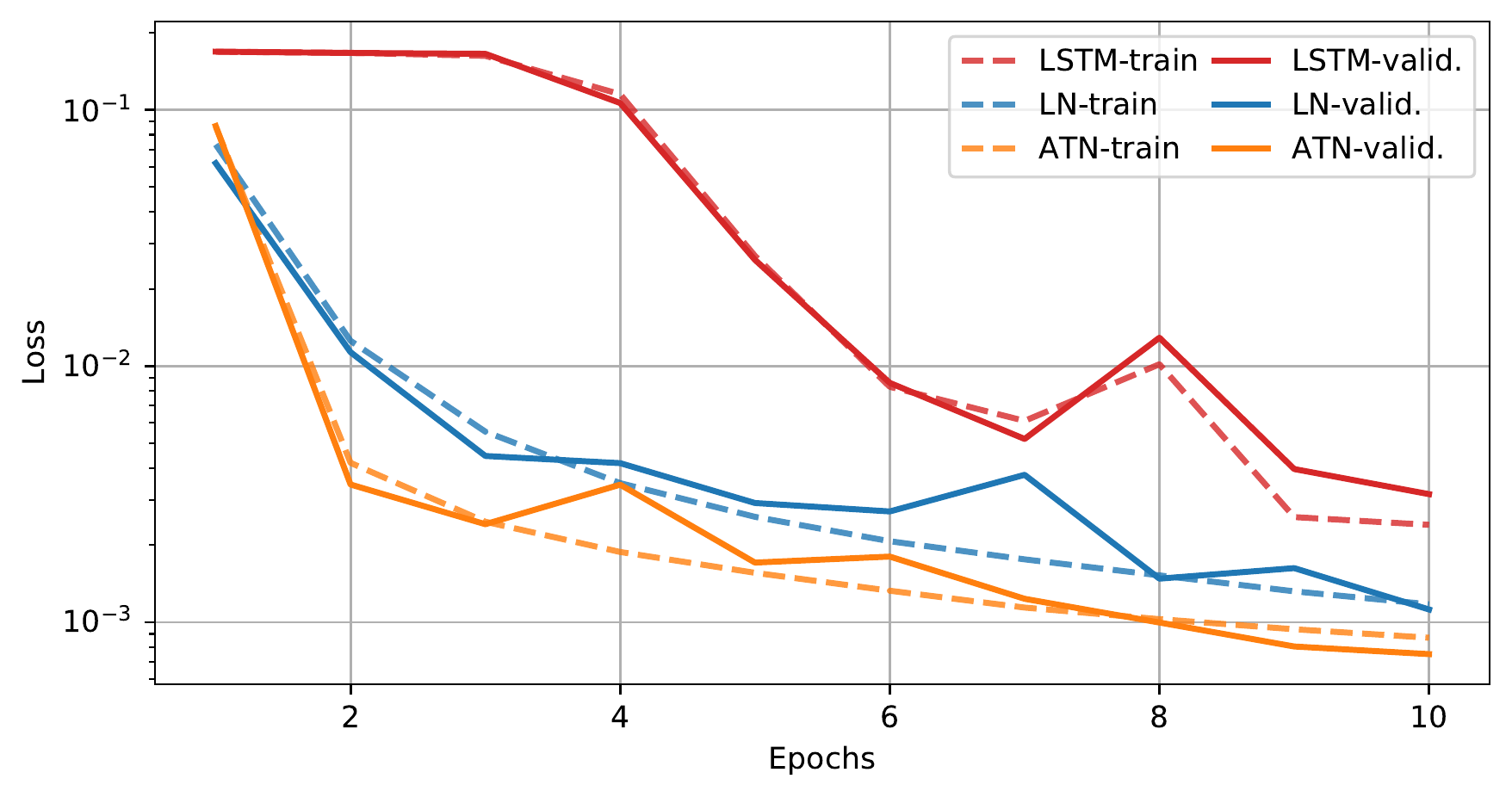}
        \caption{Adding problem with $T=200$}
        \label{exp:add200}
    \end{subfigure}
    \caption{Results on the Adding problem for $T=100$ and $T=200$.}
\end{figure}

\begin{table}[!tbp]
    \centering
    \begin{tabular}{c|c|c||c|c}
        & \multicolumn{2}{c||}{Loss $\times 10^{-3}$ $\downarrow$} & \multicolumn{2}{c}{Loss $\times 10^{-3}$ $\downarrow$} \\
        \midrule
        Sequence Length & \multicolumn{2}{c||}{$T=100$} & \multicolumn{2}{c}{$T=200$} \\
        \midrule
        & train & validation & train & validation \\
        \toprule
        LSTM & 1.034 & 1.212 & 2.390 & 3.161\\
        \midrule
        LN & 1.319 & 0.866 & 1.174 & 1.121\\
        \midrule
        ATN & 0.687 & 0.385 & 0.869 & 0.750\\
        \bottomrule
    \end{tabular}
    \caption{Adding Results: Attained minimum values. $\downarrow$ - denotes the smaller, the better result.}
    \label{tab:adding}
\end{table}

\subsubsection{Denoise Task}\label{denoiseproblem}

\begin{figure}[!tbp]
    \centering
    \begin{subfigure}{0.75\textwidth}
        \centering
        \includegraphics[width=.99\columnwidth]{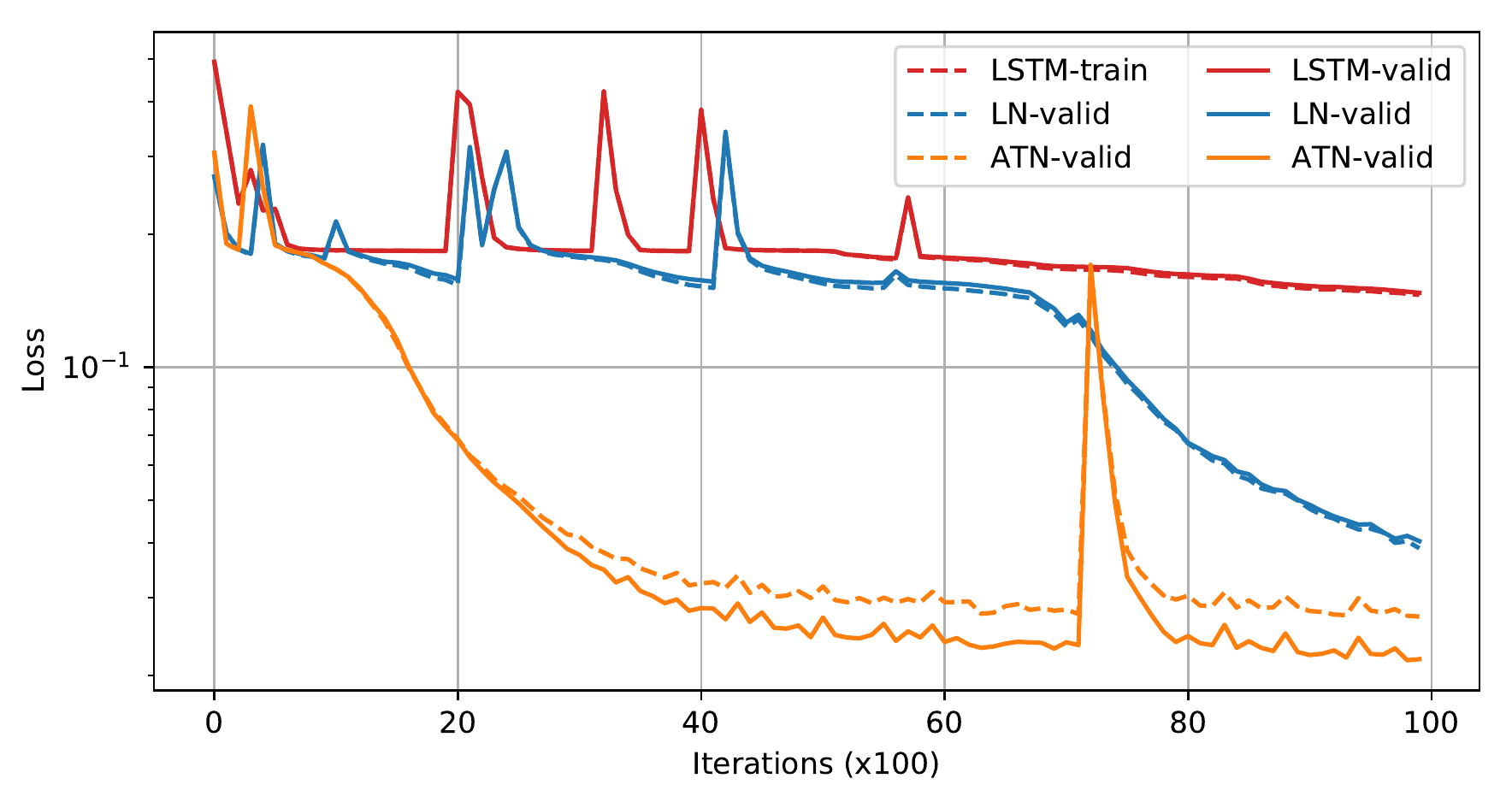}
        \caption{Denoise task with $T=100$}
        \label{exp:denoise100}
    \end{subfigure}
    \hfill
    \begin{subfigure}{0.75\textwidth}
        \centering
        \includegraphics[width=.99\columnwidth]{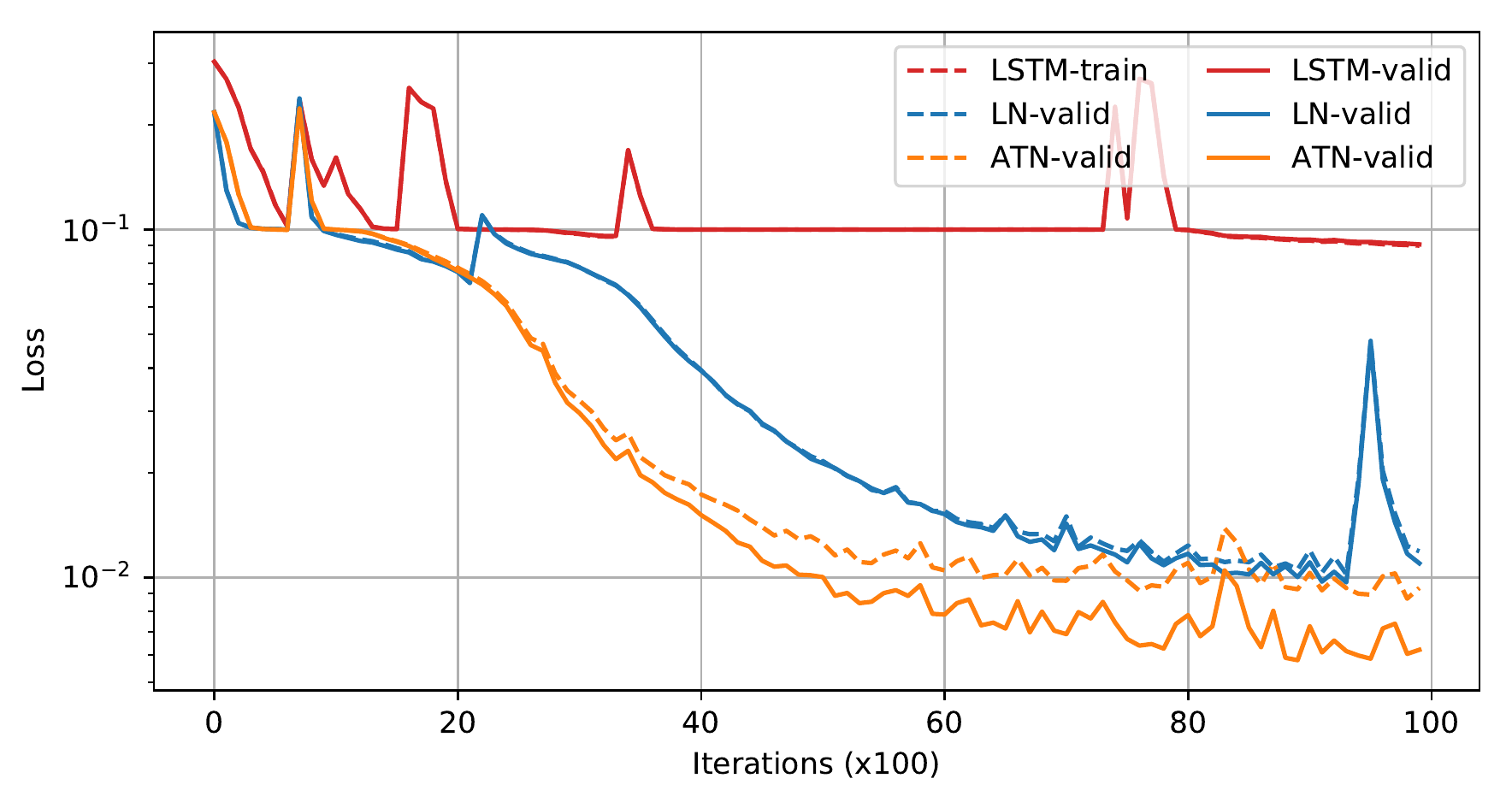}
        \caption{Denoise task with $T=200$}
        \label{exp:denoise200}
    \end{subfigure}
    \caption{Results on the Denoise task for $T=100$ and $T=200$.}
\end{figure}

\begin{table}[!tbp]
    \centering
    \begin{tabular}{c|c|c||c|c}
        & \multicolumn{2}{c||}{Loss $\times 10^{-2}$ $\downarrow$} & \multicolumn{2}{c}{Loss $\times 10^{-3}$ $\downarrow$} \\
        \midrule
        Sequence Length & \multicolumn{2}{c||}{$T=100$} & \multicolumn{2}{c}{$T=200$} \\
        \midrule
        & train & validation & train & validation \\
        \toprule
        LSTM & 14.22 & 14.64 & 88.30 & 89.88\\
        \midrule
        LN & 2.489 & 3.169 & 6.108 & 9.057\\
        \midrule
        ATN & 1.733 & 2.073 & 4.70 & 5.433\\
        \bottomrule
    \end{tabular}
    \caption{Denoise Results: Attained minimum values. $\downarrow$ - denotes the smaller, the better result.}
    \label{tab:denoise}
\end{table}

The Denoise Task~\citep{8668730, https://doi.org/10.48550/arxiv.1611.09434} is another synthetic problem that requires filtering out the noise out of a noisy sequence. This problem requires the forgetting ability of the network as well as learning long-term dependencies coming from the data~\citep{8668730}. The input sequence of length $T$ contains 10 randomly located data points, and the other $T-10$ points are considered noise data. These 10 points are selected from a dictionary $\dsp\{a_i\}_{i=0}^{n+1}$, where the first $n$ elements are data points, and the other two are the $``noise"$ and the $``marker"$ respectively. The output data consists of the list of the data points from the input, and it should be outputted as soon as it receives the $``marker"$. The goal is to filter out the noise and output the random 10 data points chosen from the input. 

\emph{Implementation Details:}
The models were trained using a batch size of 128, a single LSTM layer with a hidden size of 100, and Adam~\cite{kingma2014adam} optimizer with a learning rate of $10^{-2}$. We use $T$ values of 100 and 200. The ATN model is implemented with $k$ values of 20 and 60 for $T=100$ and $T=200$ respectively.

\emph{Results:} For both sequence lengths, our models outperform the LSTM and the LN-LSTM throughout training. While the LN-LSTM model can surpass the baseline set by the LSTM, it does so later than the ATN model, and its convergence curve flattens out at a higher loss than the ATN model.

\subsection{Language Models}
Language modeling is one of many natural language processing tasks. It is the development of probabilistic models that are capable of predicting the next word or character in a sequence using information that has preceded it. For both of the Language Modeling problems, we based our experiments on the AWD-LSTM model~\cite{merity2018regularizing}. 

\subsubsection{Character Level Penn Treebank}

The models were tested on their suitability for language modeling tasks using the character level Penn Treebank dataset~\cite{10.5555/972470.972475} also known as character-PTB or simply cPTB dataset. This dataset is a collection of English-language Wall Street Journal articles. The dataset consists of a vocabulary of 10,000 words with other words replaced as \verb|<unk>|, resulting in approximately 6 million characters that are divided into 5.1 million, 400 thousand, and 450 thousand character sets for training, validation, and testing, respectively with a character alphabet size of 50. The goal of the character-level Language Modeling task is to predict the next character given the preceding sequence of characters.

\textit{Implementation Details:} For this task, we partitioned the training sequence into 220 character length subsequences. The models were trained using a batch size of 32, a single LSTM layer with a hidden size of 1,000, an Adam~\cite{kingma2014adam} optimizer with a learning rate of $10^{-2}$, gradient clipping by norm at 3, and learning rate decay by a factor of 10 at epoch 80 and 90. The ATN model is implemented with a $k$ value of 10.

\textit{Results:} Our model shows improvement over the LSTM and the LN-LSTM models, the comparison results are presented in Table \ref{tab:ptbc}.

\begin{table}
    \centering
    \begin{tabular}{c|c|c}
        & \multicolumn{2}{c}{bpc $\downarrow$} \\
        \toprule
        & train & validation \\
        \midrule
        LSTM & 1.692 & 1.743\\
        \midrule
        LN & 1.390 & 1.520 \\
        \midrule
        ATN & 1.381 & 1.511 \\
        \bottomrule
    \end{tabular}
    \caption{Character Level Penn Treebank Results: Attained minimum values. $\downarrow$ - denotes the smaller, the better result}
    \label{tab:ptbc}
\end{table}

\subsubsection{WikiText-2}

\begin{table}
    \centering
    \begin{tabular}{c|c|c}
        & \multicolumn{2}{c}{PPL $\downarrow$} \\
        \toprule
        & train & validation \\
        \midrule
        LSTM & 80.68 & 65.65 \\
        \midrule
        LN & 80.24 & 58.0 \\
        \midrule
        ATN & 78.55 & 56.06 \\
        \bottomrule
    \end{tabular}
    \caption{WikiText-2 Results: Attained minimum values. $\downarrow$ - denotes the smaller, the better result.}
    \label{tab:wt2}
\end{table}

The WikiText-2 dataset was introduced in~\cite{merity2016pointer}. It is approximately two times the size of the Penn Treebank dataset and contains preprocessed Wikipedia articles while maintaining the original structure, punctuation, and symbols. The WikiText-2 dataset consists of approximately 2.2 million words: 2 million for the training set and 200 thousand for the validation and test sets, with a vocabulary size of 33,278. This task is a word-level Language Modeling problem with the goal to predict the next word given the preceding sequence of words.

\textit{Implementation Details:} We used a batch size of 32; three LSTM layers with embedding and hidden sizes of 400 and 1,150, respectively; BPTT values of 70; gradient clipping on the norm of 0.25; and learning rate of 30 with Stochastic Gradient Descent (SGD) optimizer without any momentum or learning rate decay, and switch to ASGD~\cite{10.1137/0330046} optimizer using nonmono criteria from~\cite{merity2018regularizing} with value 5 (our experiments showed that switching happens approximately between epochs 20 and 30 for all models: LSTM, LN, and ATN). The ATN model is implemented with a $k$ value of $25$.

\textit{Results:} In this experiment, the ATN method shows improvement over LSTM and LN method in both training and validation perplexity (PPL), see Table \ref{tab:wt2}.

\section{Ablation Studies}\label{section:ablation_studies}

\subsection{Input statistic invariance across time}\label{abl_mnist}
\begin{figure}[!t]
    \centering
    \includegraphics[width=.75\columnwidth]{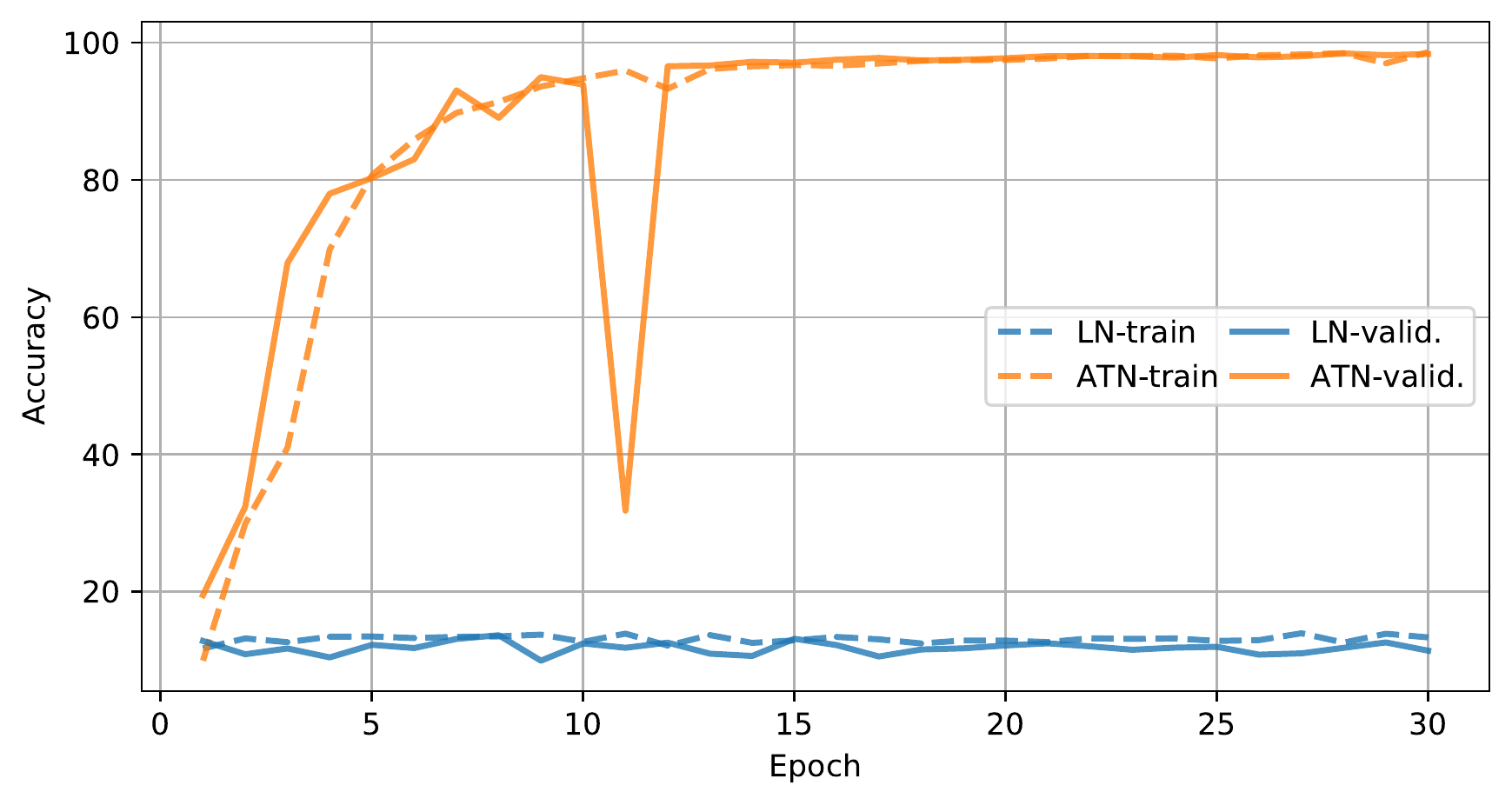}
    \caption{Pixel-by-pixel MNIST}
    \label{abl_st:mnist}
\end{figure}

In most implementations of LN-LSTM, including the one used in the experiments above, the inputs to the normalization method are the results of a linear layer, including both weight and bias. This differs slightly from the model proposed in ~\cite{ba2016layer} in that their version placed the LSTM bias outside of the normalization. Using that original architecture, we can clearly demonstrate the underlying problem with Layer Normalization that we aim to solve, the loss of input information, by setting the statistics to constant values across time. 

To show this, we use the MNIST dataset~\cite{726791} after applying Gaussian noise with variance $0.1$, for the pixel-by-pixel task~\cite{le2015simple}. This task takes the pixel values of a handwritten digit and inputs them as an unpermuted sequence of length 784 in order to predict the digit class. Due to the high probability of pixels having near zero values, we needed to use $\varepsilon$ values of $1$ in both normalization schemes. With this task, we can see in Figure \ref{abl_st:mnist} that the use of Layer Normalization renders the model completely incapable of training. Because LN takes the information from each pixel and normalizes it to the exact same distribution, it erases everything the model could use to learn, making it no better than guessing. The ATN method with $k=10$ solves this problem by its use of multiple time steps in calculating the mean and variance, meaning that the normalized outputs will not all have identical statistics. This change allows ATN to perform quite well, even when Layer Normalization cannot.

\subsection{Post Normalization Statistics}

In Figure \ref{abd_st}, we present the statistics of the post normalization components from a single iteration of training for the Adding Problem~\cite{hochreiter1997long} described in Section \ref{exp:adding_section} with $T=75$. We present the statistics from four different models, an LN-LSTM, and three ATN($k$) models with $k$ values of $5$, $25$, and $55$. All of the models did not include the use of trainable bias and gain parameters inside the normalization methods. 

In Figure \ref{abd_st:postnorm_hh}, we show the mean and variance after normalization of the product of the hidden-to-hidden weight and the hidden state, $W_{h} h^{(t-1)}$. While Layer Normalization produces constant mean and variance, the ATN method allows for the statistics to vary at each time step, resulting in curves that do not differ too much from those for LN in terms of scale but do demonstrate the natural fluctuations in the hidden states. From this, we can see that we are achieving the combination of a controlled output that is still capable of reflecting the temporal changes of the network. 

In Figure \ref{abd_st:postnorm_ih}, we show the statistics from the product of the input-to-hidden weight and the input, $W_{x} x^{(t)}$. The ATN model provides highly variable means and variances, showcasing the amount of information about the dataset which is lost when LN resets the statistics to these constant values. 

In Figure \ref{abd_st:postnorm_c}, we show the post normalization statistics of the memory cell, $c^{(t)}$. These statistics clearly demonstrate the effect of a shorter $k$ value as opposed to a longer one in the mean. In the early iterations for the $k = 5$ model, the mean has a larger spike which flattens to a bit above zero by the end of the iteration. For the larger $k$ values, this initially increased mean gets maintained throughout a larger portion of the iteration, causing the lower values further along to have less influence on the statistics.

\begin{figure}
    \centering
    \begin{subfigure}{0.99\textwidth}
        \centering
        \includegraphics[height=2.25in]{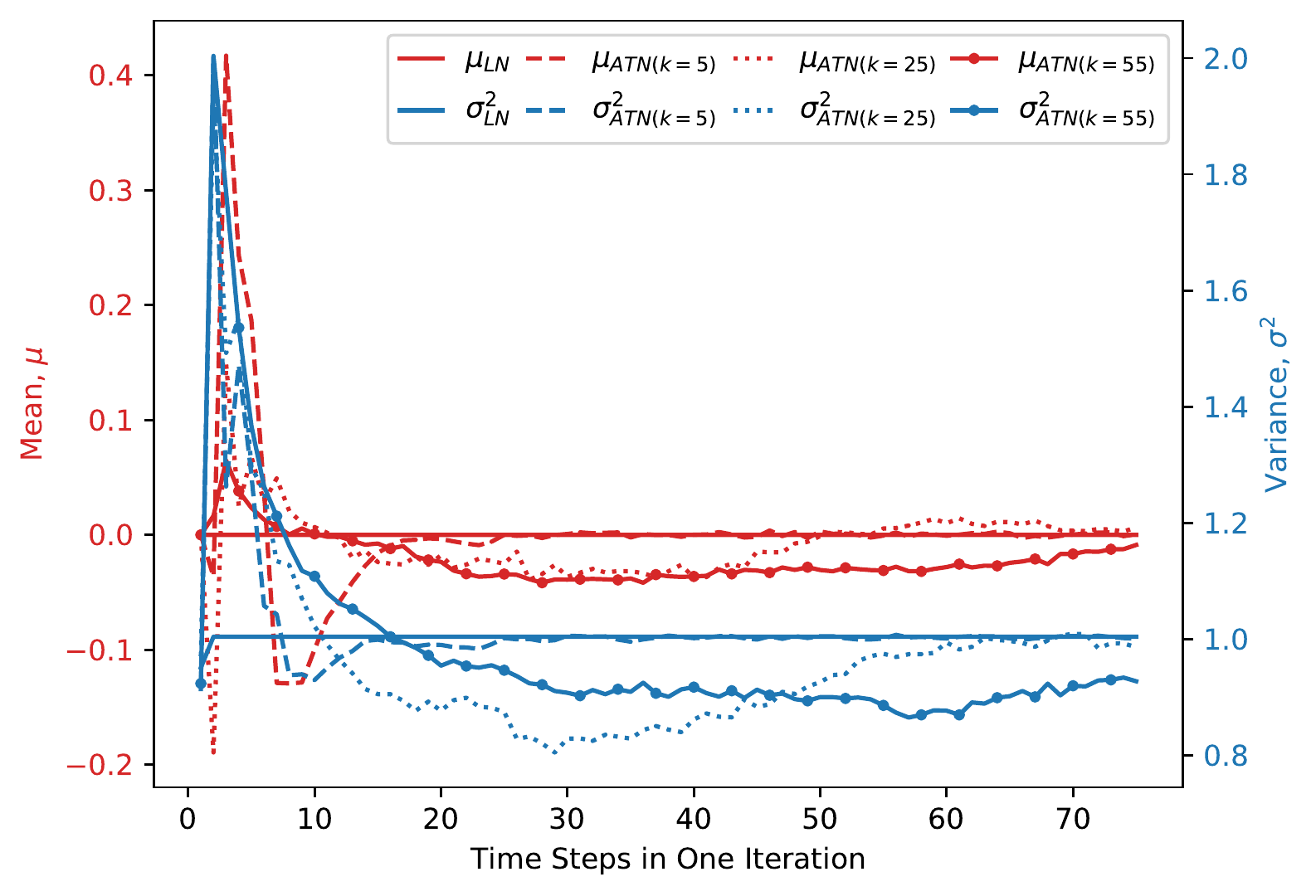}
        \caption{Hidden-to-Hidden}
        \label{abd_st:postnorm_hh}
    \end{subfigure}
    \begin{subfigure}{0.99\textwidth}
        \centering
        \includegraphics[height=2.25in]{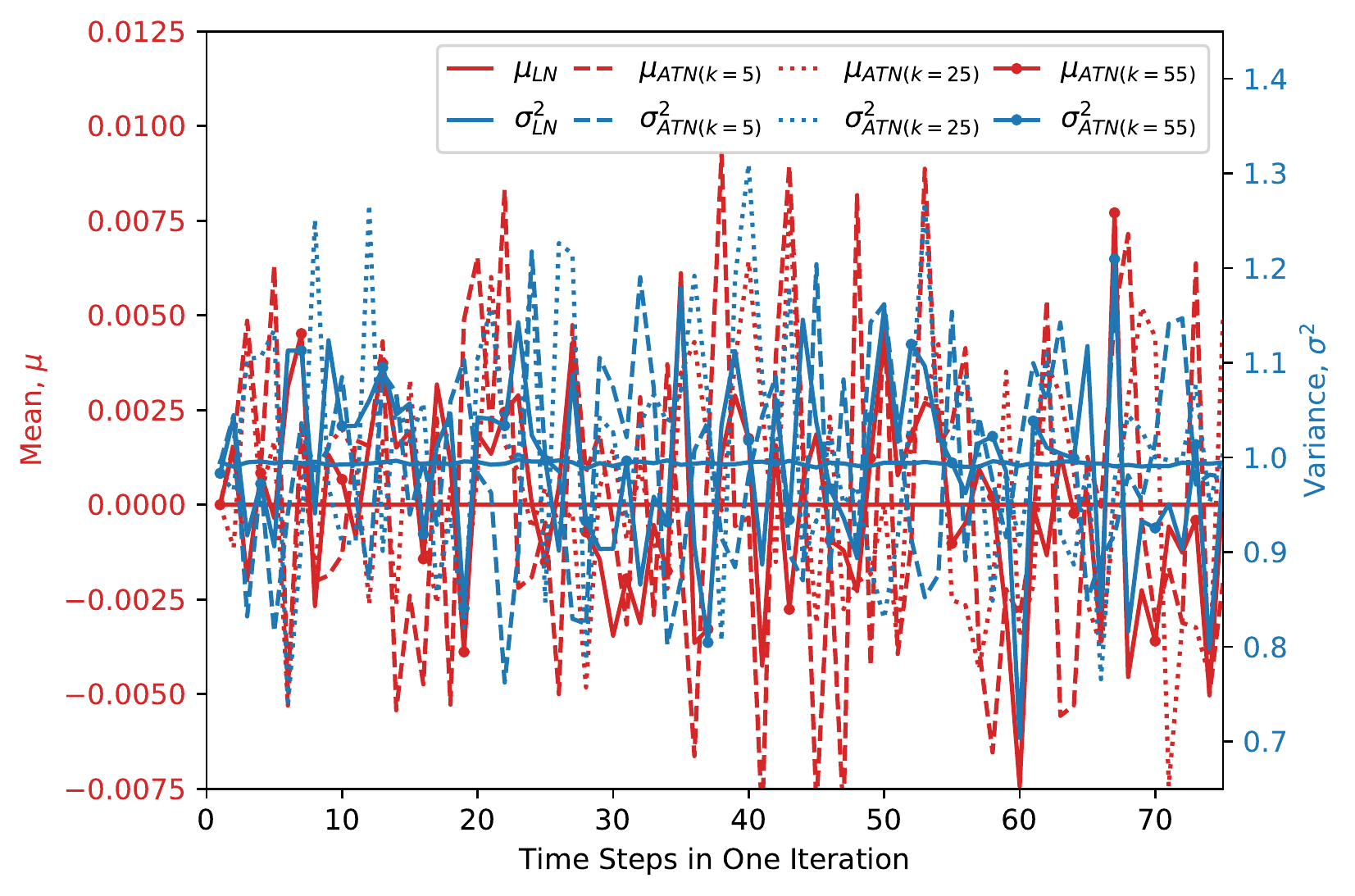}
        \caption{Input-to-Hidden}
        \label{abd_st:postnorm_ih}
    \end{subfigure}
    \begin{subfigure}{0.99\textwidth}
        \centering
        \includegraphics[height=2.25in]{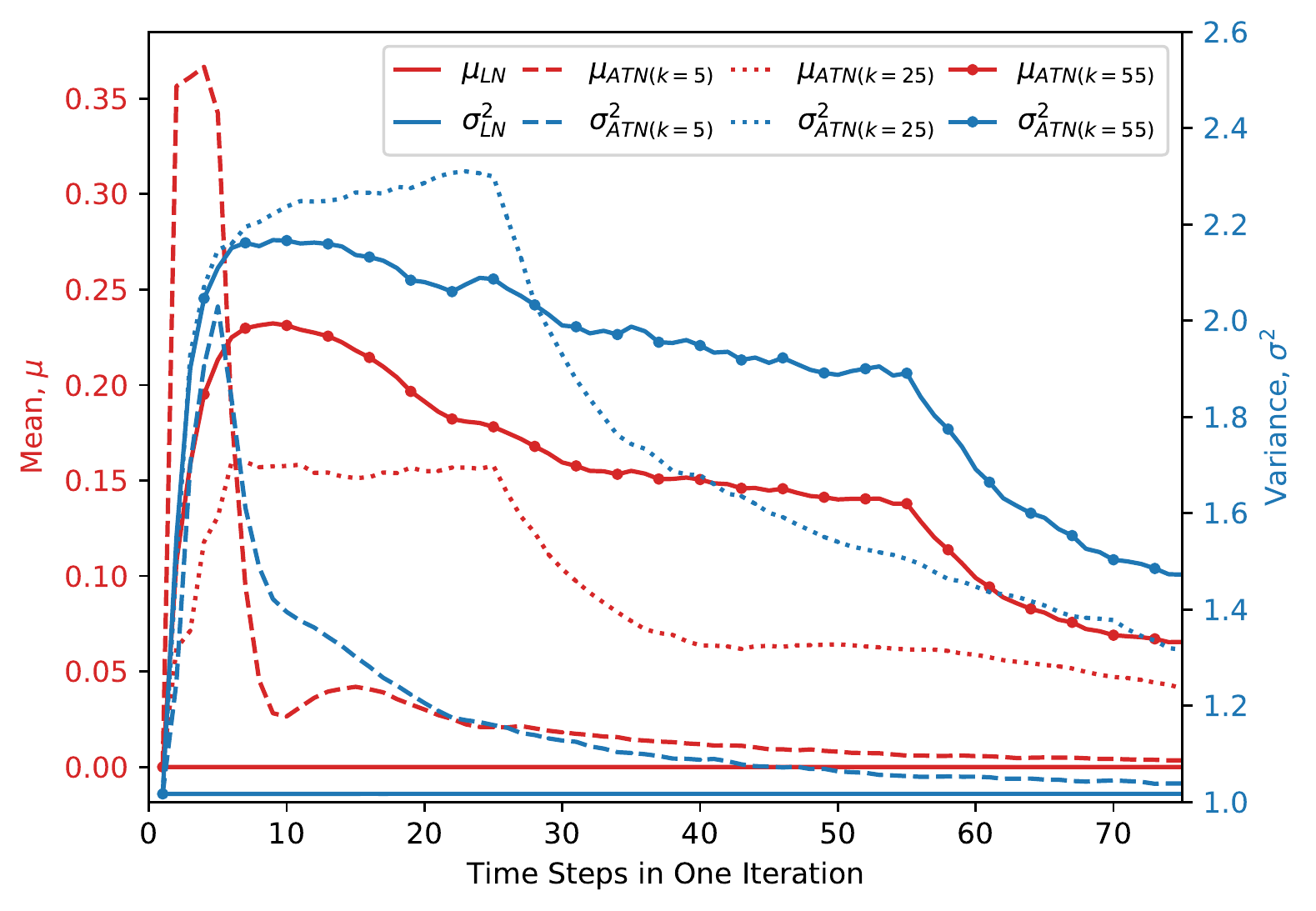}
        \caption{Memory Cell}
        \label{abd_st:postnorm_c}
    \end{subfigure}
    \caption{Post Normalization Statistics for Adding Problem with $T=75$}
    \label{abd_st}
\end{figure}

\subsection{Optimal $k$ Value for ATN method}

To highlight the importance of normalizing with respect to $k$ time steps instead of just one or all of them, we present a study on various $k$ values. In Figure \ref{abl_st:1}, we present results on the Copying Problem~\cite{hochreiter1997long} described in Section \ref{exp:copying_section} with $T=100$. For this experiment, we have trained LSTM, LN, and three ATN($k$) models with values of $k$ being 25, 45, and 65 under the same conditions. 

All ATN models perform better than both LSTM and LN. The ATN($k=45$) model performs better than ATN($k=25$) which should not be a surprise since the larger $k$ value would mean we are normalizing with respect to a larger set and getting better statistics for the mean and variance, however, ATN($k=65$) performs poorer than ATN($k=45$) and even poorer than ATN($k=25$) which suggests that too large $k$ may actually degrade the result. This may be due to numerical difficulties in propagating derivative through $k$ steps in ATN for a large $k$.

\begin{figure}
    \centering
    \includegraphics[width=.75\textwidth]{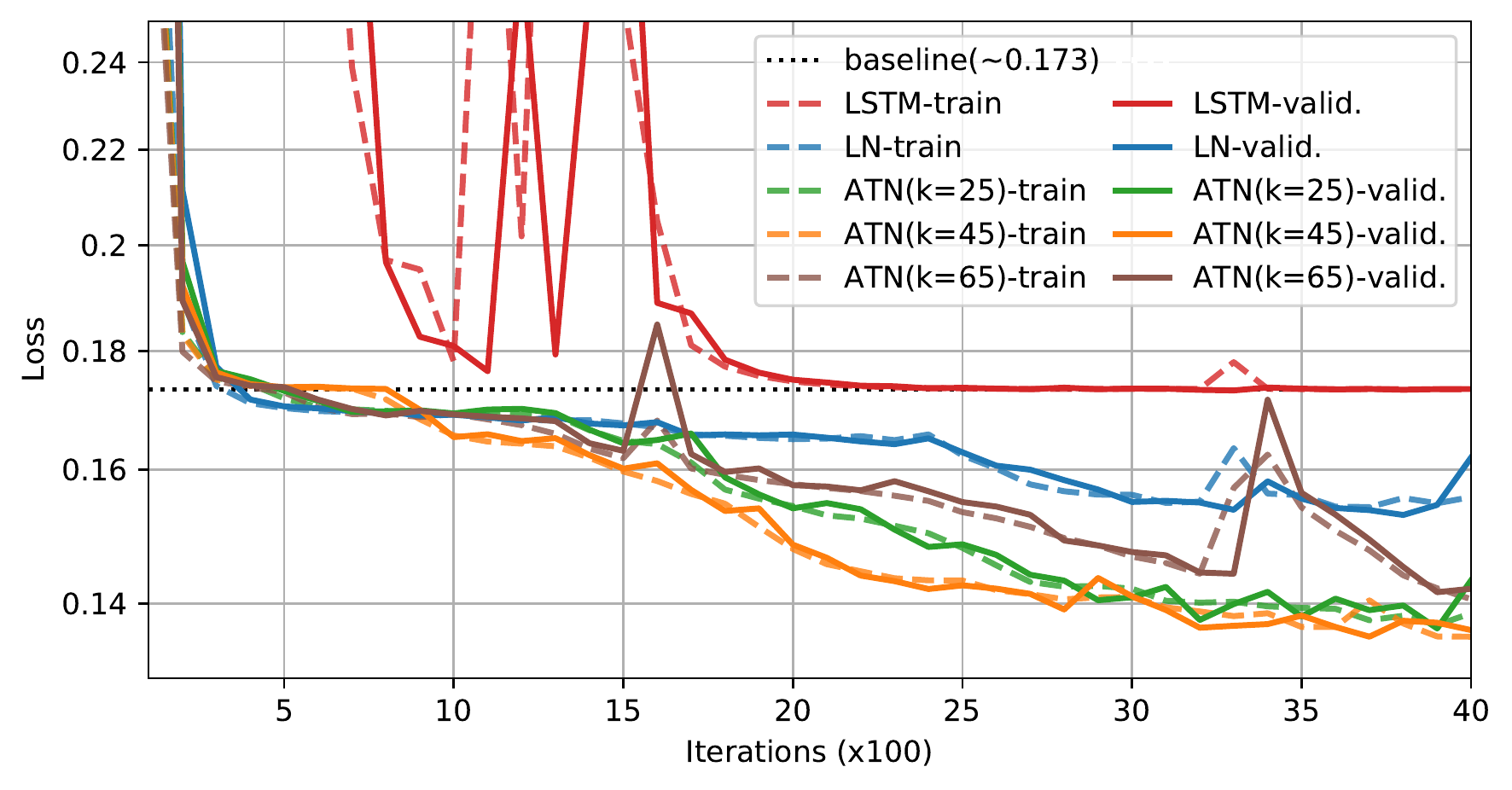}
    \caption{$k$ value study in ATN method}
    \label{abl_st:1}
\end{figure}

\section{Conclusion}\label{section:conclusion}
In this paper, we have introduced a method for adapting statistics-based normalization methods to recurrent neural networks to break the time invariance of the traditional normalization methods. We have presented theoretical results on the impact this method has on the model's gradients, as well as showing the preservation of invariance to the rescaling of the weight matrix. Our experiments demonstrate that our ATN-LSTM improves over LN for LSTM in both training and testing results. In light of the popularity of LN in practical applications, our method offers an important alternative for further improving RNN performance.

\section*{Acknowledgment}
We would like to thank the University of Kentucky Center for Computational Sciences and Information Technology Services Research Computing for their support and use of the Lipscomb Compute Cluster and associated research computing resources.

\section*{Ethical statement}

An idea that we proposed in this manuscript uses theoretical and experimental methods to develop improved normalization for training Recurrent Neural Networks. This work improves a well-known Layer Normalization technique that is widely used in various deep learning architectures and their applications. However, the method proposed in this work can be applied to other normalization methods such as BN for LSTMs. This work has no ethical or future societal consequence outside of the usage of it by unknown to us parties on unpredictable applications.


\bibliographystyle{elsarticle-num} 
\bibliography{refs}

\clearpage

\appendix
\addcontentsline{toc}{section}{Supplementary Materials}
\renewcommand{\thesubsection}{Appendix \Alph{subsection}}
\section*{Supplementary Materials}
\subsection{Proof of Proposition \ref{prop:gradients}}\label{Supp_Mater:A}

We present below the derivation for the propagation of the gradient through the ATN method.

\renewcommand{\theequation}{A.\arabic{equation}}
\setcounter{prop}{0}
\begin{prop}
Consider ATN for a sequence $\mathbf{a} = \{a^{(t)}\}\subset {\mathbb R}^n$ produced in a RNN and let $y^{(t)}=ATN(\mathbf{a}_k^{(t)};\gamma,\beta)$. Then, for $0 \leq m \leq k-1$, we have:

\begin{equation}
    \resizebox{0.875\hsize}{!}{$
    \dfrac{\partial y_i^{(t)}}{\partial a_i^{(t-m)}} = \gamma\odot\dfrac{\displaystyle\dfrac{\partial a_i^{(t)}}{\partial y_i^{(t-m)}}\dfrac{\partial y_i^{(t-m)}}{\partial a_i^{(t-m)}} - \dfrac{\partial \mu_{t,k}}{\partial a_i^{(t-m)}}}{\sqrt{\sigma_{t,k}^2 + \varepsilon}} - \gamma\odot\dfrac{a_i^{(t)}-\mu_{t,k}}{2\left(\sigma_{t,k}^2+\varepsilon\right)^{3/2}}\dfrac{\partial \sigma_{t,k}^2}{\partial a_i^{(t-m)}}
    $}
\end{equation}
where
\begin{equation}
    \dfrac{\partial \mu_{t,k}}{\partial a_i^{(t-m)}} = \dfrac{1}{nk}\sum_{j=0}^{m}\dfrac{\partial a_i^{(t-j)}}{\partial a_i^{(t-m)}}\\
\end{equation}
and
\begin{equation}
    \resizebox{0.875\hsize}{!}{$
    \dsp\dfrac{\partial \sigma_{t,k}^2}{\partial a_i^{(t-m)}} = \dfrac{2}{nk}\left(\sum_{j=0}^m\left(a_i^{(t-j)}-\mu_{t,k} \right) \dfrac{\partial a_i^{(t-j)}}{\partial a_i^{(t-m)}} - \sum_{j=0}^{k-1}\sum_{s=1}^n\left(a_s^{(t-j)}-\mu_{t,k}\right) \dfrac{\partial \mu_{t,k}}{\partial a_i^{(t-m)}}\right)
    $}
\end{equation}
\end{prop}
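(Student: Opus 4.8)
The plan is to differentiate the scalar expression for a single component of the normalized state and to track the three distinct channels through which $a_i^{(t-m)}$ enters it. Since $ATN$ acts componentwise (with $\gamma,\beta$ applied by Hadamard product), it suffices to work with
\[
y_i^{(t)} = \gamma_i\,\frac{a_i^{(t)}-\mu_{t,k}}{\sqrt{\sigma_{t,k}^2+\varepsilon}} + \beta_i ,
\]
and to note that $a_i^{(t-m)}$ influences $y_i^{(t)}$: (i) through the explicit numerator term $a_i^{(t)}$, which for $m\ge 1$ depends on $a_i^{(t-m)}$ only via the recurrent dynamics of the RNN, i.e.\ via the normalized output $y_i^{(t-m)}$ that was fed back into the cell; (ii) through the running mean $\mu_{t,k}$; and (iii) through the running variance $\sigma_{t,k}^2$. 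Writing the chain-rule factor for (i) as $\frac{\partial a_i^{(t)}}{\partial a_i^{(t-m)}}=\frac{\partial a_i^{(t)}}{\partial y_i^{(t-m)}}\frac{\partial y_i^{(t-m)}}{\partial a_i^{(t-m)}}$ (with the convention that this factor is $1$ when $m=0$) and applying the product rule together with $\frac{d}{du}(\sigma^2+\varepsilon)^{-1/2}=-\tfrac12(\sigma^2+\varepsilon)^{-3/2}\frac{d\sigma^2}{du}$ immediately yields the stated expression for $\frac{\partial y_i^{(t)}}{\partial a_i^{(t-m)}}$, modulo the formulas for $\frac{\partial \mu_{t,k}}{\partial a_i^{(t-m)}}$ and $\frac{\partial \sigma_{t,k}^2}{\partial a_i^{(t-m)}}$.

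For the mean, differentiating $\mu_{t,k}=\frac{1}{nk}\sum_{j=0}^{k-1}\sum_{s=1}^n a_s^{(t-j)}$ term by term produces a double sum of $\frac{\partial a_s^{(t-j)}}{\partial a_i^{(t-m)}}$. Two reductions apply: by causality of the RNN, a preactivation produced at time $t-j$ can depend on $a^{(t-m)}$ only when $t-j\ge t-m$, i.e.\ $j\le m$, so the outer sum truncates at $m$; and since the gradient is tracked along channel $i$, the cross-channel terms with $s\ne i$ do not appear, collapsing the inner sum. This gives $\frac{\partial \mu_{t,k}}{\partial a_i^{(t-m)}}=\frac{1}{nk}\sum_{j=0}^{m}\frac{\partial a_i^{(t-j)}}{\partial a_i^{(t-m)}}$. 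For the variance I expand $\sigma_{t,k}^2=\frac{1}{nk}\sum_{j,s}(a_s^{(t-j)}-\mu_{t,k})^2$, differentiate via $\partial_u(a-\mu)^2=2(a-\mu)(\partial_u a-\partial_u\mu)$, and split into the piece carrying $\partial_u a_s^{(t-j)}$ — to which the same causality/channel reduction applies, yielding $\sum_{j=0}^m(a_i^{(t-j)}-\mu_{t,k})\frac{\partial a_i^{(t-j)}}{\partial a_i^{(t-m)}}$ — and the piece carrying $\partial_u\mu_{t,k}$, in which $\frac{\partial \mu_{t,k}}{\partial a_i^{(t-m)}}$ factors out of the full double sum $\sum_{j=0}^{k-1}\sum_{s=1}^n(a_s^{(t-j)}-\mu_{t,k})$. (That latter sum vanishes by the definition of $\mu_{t,k}$, but it is kept in the stated form to parallel the usual Layer Normalization derivation.) Multiplying by the prefactor $\frac{2}{nk}$ and collecting terms gives the claimed formula.

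The only genuinely delicate point is the bookkeeping of the recursive dependence: $a_i^{(t)}$ and each $a_i^{(t-j)}$ with $j<m$ are themselves functions of $a_i^{(t-m)}$ through the shared-weight recurrence and through the earlier $ATN$ normalizations applied along the way, so the symbols $\frac{\partial a_i^{(t-j)}}{\partial a_i^{(t-m)}}$ must be read as total derivatives accumulated by backpropagation up to time $t-j$ (and $\frac{\partial y_i^{(t-m)}}{\partial a_i^{(t-m)}}$ is itself of the same form as the quantity being computed, so the identity is understood as a recursion to be unrolled). Setting up this dependency graph carefully, and justifying that the truncation $j\le m$ and the per-channel restriction are precisely what the backward pass carries, is the main obstacle; once that is in place, the rest is a routine application of the product and chain rules to the componentwise definition of $ATN$.
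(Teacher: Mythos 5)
Your proposal is correct and follows essentially the same route as the paper's proof: differentiate $\mu_{t,k}$ and $\sigma_{t,k}^2$ term by term, truncate the sums to $j\le m$ and to the channel $s=i$, and then apply the product/chain rule to the componentwise normalization, writing $\frac{\partial a_i^{(t)}}{\partial a_i^{(t-m)}}=\frac{\partial a_i^{(t)}}{\partial y_i^{(t-m)}}\frac{\partial y_i^{(t-m)}}{\partial a_i^{(t-m)}}$. If anything, you make explicit the causality and per-channel reductions (and the recursive reading of the total derivatives) that the paper's derivation applies silently.
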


\begin{proof}
    Suppose $0 \leq m \leq k-1$ then 
    
    \begin{equation}
        \mu_{t,k}=\dfrac{1}{nk}\sum_{j=0}^{k-1}\sum_{s=1}^{n}a_s^{(t-j)}
    \end{equation}
    
    and
    
    \begin{align}
        \dfrac{\partial \mu_{t,k}}{\partial a_i^{(t-m)}}&=\dfrac{1}{nk}\sum_{j=0}^{k-1}\sum_{s=1}^{n}\dfrac{\partial a_s^{(t-j)}}{\partial a_i^{(t-m)}}\\
        &=\dfrac{1}{nk}\left(\dfrac{\partial a_i^{(t)}}{\partial a_i^{(t-m)}}+\dfrac{\partial a_i^{(t-1)}}{\partial a_i^{(t-m)}}+\cdots+\dfrac{\partial a_i^{(t-m+1)}}{\partial a_i^{(t-m)}}+1\right)\\
        &=\dfrac{1}{nk}\sum_{j=0}^{m}\dfrac{\partial a_i^{(t-j)}}{\partial a_i^{(t-m)}};
    \end{align}
    
    \begin{equation}
        \sigma_{t,k}^2=\dfrac{1}{nk}\sum_{j=0}^{k-1}\sum_{s=1}^{n}\left(a_s^{(t-j)}-\mu_{t,k}\right)^2
    \end{equation}
    
    and
    
    \begin{align}
        \dfrac{\partial \sigma_{t,k}^2}{\partial a_i^{(t-m)}}&=\dfrac{1}{nk}\sum_{j=0}^{k-1}\sum_{s=1}^{n}2\left(a_s^{(t-j)}-\mu_{t,k}\right)\left(\dfrac{\partial a_s^{(t-j)}}{\partial a_i^{(t-m)}}-\dfrac{\partial \mu_{t,k}}{\partial a_i^{(t-m)}}\right)\\
        &=\resizebox{0.7325\hsize}{!}{$\dsp\dfrac{2}{nk}\left(\sum_{j=0}^{m}\left(a_i^{(t-j)}-\mu_{t,k}\right)\dfrac{\partial a_i^{(t-j)}}{\partial a_i^{(t-m)}}-\sum_{{j=0}}^{k-1}\sum_{{s=1}}^{n}\left(a_s^{(t-j)}-\mu_{t,k}\right)\dfrac{\partial \mu_{t,k}}{\partial a_i^{(t-m)}}\right)$};
    \end{align}
    
    \begin{equation}
        y^{(t)}=\gamma\odot\dfrac{a^{(t)}-\mu_{t,k}}{\sqrt{\sigma_{t,k}^2+\varepsilon}}+\beta
    \end{equation}
    
    and
    
    \begin{align}
        \dfrac{\partial y_i^{(t)}}{\partial a_i^{(t-m)}}=\gamma\odot\dfrac{\displaystyle \dfrac{\partial a_i^{(t)}}{\partial a_i^{(t-m)}}- \dfrac{\partial \mu_{t,k}}{\partial a_i^{(t-m)}}}{\sqrt{\sigma_{t,k}^2+\varepsilon}}-\gamma\odot\dfrac{1}{2}\dfrac{a_i^{(t)}-\mu_{t,k}}{\left(\sigma_{t,k}^2+\varepsilon\right)^{3/2}}\dfrac{\partial \sigma_{t,k}^2}{\partial a_i^{(t-m)}}
    \end{align}
    
    where 
    
    \begin{align}
        \dfrac{\partial a_i^{(t)}}{\partial a_i^{(t-m)}} = \dfrac{\partial a_i^{(t)}}{\partial y_i^{(t-m)}}\dfrac{\partial y_i^{(t-m)}}{\partial a_i^{(t-m)}}.
    \end{align}
\end{proof}

\subsection{Invariance properties}\label{Supp_Mater:B}
\setcounter{table}{0}
\renewcommand{\thetable}{B.\arabic{table}}

\begin{table}
    \centering
    \begin{tabular}{l||c|c|c|c|c}
         \toprule
         & BN   & WN    & LN    & ATN-BN    & ATN-LN \\
         \midrule
         Weight matrix re-scaling       & Yes  & Yes   & Yes    & Yes    & Yes \\
         \midrule
         Weight matrix re-centering     & No   & No    & Yes    & No     & No \\
         \midrule
         Weight vector re-scaling       & Yes   & Yes   & No    & Yes    & No \\
         \midrule
         Dataset re-scaling             & Yes   & No    & Yes   & Yes    & Yes \\
         \midrule
         Dataset re-centering           & Yes   & No    & No    & Yes    & No \\
         \midrule
         Single training case re-scaling & No   & No    & Yes   & No     & Yes \\
         \midrule
         Input at a single time re-scaling & No & No    & Yes   & No     & No \\
         \bottomrule
    \end{tabular}
    \caption{Invariance properties under different normalization methods: BN - Batch Normalization~\cite{pmlr-v37-ioffe15}, WN - Weight Normalization~\cite{NIPS2016_ed265bc9}, LN - Layer Normalization~\cite{ba2016layer}, ATN-BN - Assorted-Time Normalization built on BN method, and ATN-LN - Assorted-Time Normalization built on LN method.}
    \label{tab:invariance_properties}
\end{table}{}

In Table \ref{tab:invariance_properties} we provide a summary of invariance properties for several normalization methods. This is an expansion of Table 1 in ~\cite{ba2016layer}. Weight matrix re-scaling and re-centering are the adjustments of the weight matrix by multiplying a constant scaling factor or adding a constant re-scaling factor. Weight vector re-scaling is similar to weight matrix re-scaling, but only adjusts a single vector instead of the entire matrix. Dataset re-centering and re-scaling consist of changing every input example by multiplying or adding a constant. Single training case re-scaling is when the dataset adjustments are applied to just one example. 
Of particular interest is the invariance with respect to the scaling of an input at a single time point, which was referenced in Section \ref{section:theory}. This is one of the invariance property which LN has that its ATN adaptation do not, and we argue that this is one of the reasons that our method improves on LN.

\end{document}